\documentclass[letterpaper, 10 pt, conference]{ieeeconf}
\usepackage{cite}
\usepackage[pdftex]{graphicx}
\usepackage[cmex10]{amsmath}
\usepackage{amssymb,graphicx,charter, latexsym}
\usepackage{amsfonts}
\usepackage{bbm}
\usepackage{amsfonts}
\usepackage{bbm}
\usepackage{algorithm}
\usepackage{algorithmic}
\usepackage{tikz} 
\usepackage{amsfonts}
\usepackage{verbatim}
\usepackage{amssymb}
\newtheorem{lemma}{Lemma}
\usepackage{comment}

\newtheorem{theorem}{Theorem}
\newtheorem{condition}{Condition}

\IEEEoverridecommandlockouts                              

\begin{document}
\title{Kiefer-Wolfowitz Algorithm is Asymptotically Efficient for a Class of Non-Stationary Bandit Problems
}
\author{ Rahul Singh and Taposh Banerjee
\thanks{Rahul Singh is with the Laboratory of Information and Decision Systems (LIDS), Massachusetts Institute of Technology, Cambridge, MA 02139, USA; Taposh Banerjee is with SEAS, Harvard University, Cambridge, MA. 
{\tt\small rsingh12@mit.edu, tbanerjee@seas.harvard.edu.}
 } %
 }
\maketitle
\begin{abstract}
We consider the problem of designing an allocation rule or an ``online learning algorithm" for a class of bandit problems in which the set of control actions available at each time $s$ is a convex, compact subset of $\mathbb{R}^d$. Upon choosing an action $x$ at time $s$, the algorithm obtains a noisy value of the unknown and time-varying function $f_s$ evaluated at $x$. The ``regret" of an algorithm is the gap between its expected reward, and the reward earned by a strategy which has the knowledge of the function $f_s$ at each time $s$ and hence chooses the action $x_s$ that maximizes $f_s$.
For this non-stationary bandit problem set-up, we consider two variants of the Kiefer Wolfowitz (KW) algorithm i) KW with fixed step-size $\beta$, and ii) KW with sliding window of length $L$. We show that if the number of times that the function $f_s$ varies during time $T$ is $o(T)$, and if the learning rates of the proposed algorithms are chosen ``optimally", then the regret of the proposed algorithms is $o(T)$, and hence the algorithms are asymptotically efficient. 
\end{abstract}
\section{Introduction}\label{sec:intro}
The Multi-Armed Bandit problem (MABP) requires a player to play an arm at each time $s=1,2,\ldots$ from a set of arms. If $X_s$ denotes the arm played at time $s$, then the player receives a random reward at time $s$, the distribution of which depends on $X_s$. The objective of the player is to maximize the \emph{expected value} of the cumulative reward collected over a period of time $T$. The player does not know the mean value of the random reward as a function of the choice $x$, and hence the control action corresponding to the choice of arm to be played needs to balance an exploration-exploitation trade-off.

This paper is concerned with a particular class of bandit problems in which the control action available to the player can be mapped to a convex compact subset of $\mathbb{R}^d$, i.e., the continnum bandit problem~\cite{agrawal1995continuum}, in which the mean reward of the arms is non-stationary. The addition of non-stationarity into the MABP adds to the complexity involved in the exploration-exploitation dilemma, since now the player's belief about the mean reward of an arm cannot depend upon past data that is ``too old" because the reward distribution of arms might have changed since the time that information was collected. Thus, the learning rate of the player has to be suitably adapted to the rate-of-change of the mean reward function.




\section{Kiefer Wolfowitz algorithm}
Let $\mathcal{D}$ be a compact and convex subset of $\mathbb{R}^d$. The original KW algorithm was designed in the context of maximizing a fixed function by obtaining noisy samples of the function values. We begin by describing the KW algorithm for the case when the function $f:\mathcal{D}\to\mathbb{R}$ to be optimized is fixed. The maximizer of $f$ is denoted $\theta(f)\in \mathcal{D}$. The vanilla version of the KW algorithm maintains, at each time-step $s$ an estimate of the function maximizer, denoted as $X_s=\left(X_s(1),X_s(2),\ldots,X_s(d)\right)$. It then makes an estimate of the derivatives $\left(\nabla f\right)_{X_s}(i)$ of the unknown function $f$ by sampling the function values at points $X_s+ c_s e(i),i=1,2,\ldots,d$ and $X_s-c_se(i),i=1,2,\ldots,d$, where $e(i)$ is the unit vector with $1$ in the $i$-th place. 
 Let $F^+_{s}(i),F^-_{s}(i)$ be the noisy values of the function at $X_s+c_se(i)$ and $X_s-c_se(i)$ respectively.  Denote by $Y_s$ the estimated value of the derivative of function $f$ at $X_s$. If $Y_s=\left(Y_s(1),Y_s(2),\ldots,Y_s(d)\right)$ is an estimate of $\nabla f$ at $X_s$, we then have that,
\begin{align}\label{eq:1}
Y_s(i) = \frac{F^{+}_s(i)-F^{-}_s(i)}{2c_s}, i=1,2,\ldots,d,
\end{align} 
where $Y_s(i)$ is an estimate of $\left(\nabla f\right)(i)$, i.e., the $i$-th component of the gradient  at $X_s$.
Once an estimate of the derivative of $f$ at $X_s$ has been made, the KW algorithm then updates the estimate of maximizer as follows,
\begin{align}\label{eq:2}
X_{s+1} = X_s + \beta_s Y_s,
\end{align}
where $\beta_s$ is called the learning rate. Typically the step sizes are chosen as $\beta_s = s^{-1/2}, c_s = s^{-1/4}$. A detailed description of the KW algorithm can be found in~\cite{kushner}.
\section{Past Works and Contributions}
A survey of the results on MABP literature can be found in~\cite{bubeck2012regret}.~\cite{agrawal1995continuum} is the first work to consider the continum bandit problem.

KW algorithm was introduced in~\cite{kiefer1952stochastic}, and since then its convergence rate, and the asymptotic distribution of the estimates have been established~\cite{derman1956application,fabian1967stochastic,polyak1990optimal}. However, we note that in general the asymptotic convergence rate of an algorithm does not imply regret bounds. 

~\cite{cope2009regret} performs a regret analysis for the KW algorithm when the function is kept constant. In contrast with the work in~\cite{cope2009regret} we consider the non-stationary set-up in which the distribution of the reward sequence, or the unknown function to be maximized, changes over time. A regret analysis in this case amounts to controlling the performance of the algorithm over \emph{all possible} sequences of functions $\{f_s\}_{s=1}^{T},f_s\in\mathcal{C}$.

We analyze two popular variants of the KW algorithm for the context of non-stationary function maximization i) KW with constant step-size $\beta$, where $\beta$ is the ``learning rate" and ii) KW with sliding window of length $L$, also denoted ``memory length". We impose restrictions on the class $\mathcal{C}$ of allowable functions, and obtain bounds on the regret of $KW_{\beta}$, $KW_{L}$ algorithms in terms of the degree of non-stationarity, i.e. the quantity $\frac{\Delta_T}{T}$, where $\Delta_T$ is the number of times that the function $f_s$ being sampled changes until time $T$. 

We obtain the optimal learning rate $\beta^\star$, and window length $L^{\star}$ in terms of $\frac{\Delta_T}{T}$. We then show that if these KW variants use optimal $\beta^\star$ (resp. $L^\star$), then they are asymptotically efficient, i.e., their cumulative regret is asymptotically $0$ if $\lim_{T\to\infty}\frac{\Delta_T}{T}=0$.
\section{Non-stationary Function Maximization and Regret}
At each time $s=1,2,\ldots$, an allocation rule $\mathcal{A}$ chooses the \emph{control action} $X_s\in \mathcal{D}\subset\mathbb{R}^d$. We assume that $\mathcal{D}$ is convex and compact. The (random) reward earned at time $s$ is then equal to $F_s$.
 If $\mathcal{A}$ chooses the action $x$, then the distribution of the reward at time $s$, i.e., $F_s$, is given by $G(\cdot,x,f_s(x))$, and the mean value of the reward earned is $f_s(x)$, i.e., $\mathbb{E}\left\{F_s | X_s = x\right\} = f_s(x)$. We assume that the functions $f_1,f_2,\ldots$ belong to a function class $\mathcal{C}$, for each $f\in\mathcal{C}$, $f(x)$ is bounded for all $x\in\mathcal{D}$. Equivalently, the algorithm $\mathcal{A}$ obtains a ``noisy version" of the true function $f_s(\cdot)$ evaluated at $x$. At time $s$, Algorithm $\mathcal{A}$ observes its control action $X_s$ and the reward $F_s$, however it does not observe the function $f_s$. The control algorithm/allocation rule $\mathcal{A}$, for each time $s$, maps the history $\{X_n,F^{+/-}_n\}_{n=1}^{s-1}$ to an action $x\in \mathcal{D}$.

Denote by $f_{[1:s]}$ the sequence of functions $f_1,f_2,\ldots,f_s$, and for a function $f$, denote by $\theta(f)$ the value of $x$ that maximizes $f$. The total regret accumulated by an algorithm $\mathcal{A}$ until time step $T$ is then defined to be,
 \begin{align*}
 \mathcal{R}(T,\mathcal{A},f_{[1:T]}) = \mathbb{E} \left\{\sum_{s=1}^{T} f_s(\theta(f_s)) - f_s(X_s)   \right\},
 \end{align*} 
where expectation is taken with respect to the probability measure induced by the control algorithm $\mathcal{A}$ which makes the choice of the sampling sequence $\{X_s\}_{s=1}^{T}$ and the observations $\{F^{+/-}_s\}_{s=1}^{T}$. 

We will be interested in \emph{worst-case regret} of the algorithm $\mathcal{A}$, i.e., the quantity,
 \begin{align}\label{rp}
 \mathcal{R}(T,\mathcal{A}) = \sup_{f_{1:T}: f_s \in \mathcal{C}~\forall s\in\left[1,T\right]}\mathcal{R}(T,\mathcal{A},f_{[1:T]}).
 \end{align} 
The control algorithm $\mathcal{A}$ is asymptotically efficient~\cite{lai1985asymptotically} if  
 \begin{align}\label{rp1}
 \limsup_{T\to\infty}\frac{ \mathcal{R}(T,\mathcal{A})}{T} = 0.
 \end{align} 
Next, we impose some restrictions on the allowable function class $\mathcal{C}$ that will enable us to obtain meaningful bounds on the regret. 
\section{Assumptions on the function class $\mathcal{C}$} 
We now make certain assumptions on the function class $\mathcal{C}$ from which the functions $f_s,s=1,2,\ldots$ are chosen. This allows us to obtain non-trivial bounds on the regret~\eqref{rp}. The conditions mentioned below are mostly taken from~\cite{cope2009regret}. \begin{condition}\label{condi1}
 Let $f\in \mathcal{C}$. Then $f$ is three times continuously differentiable for all $x\in \mathcal{D}$, and there exist positive constants $K_1,K_2$ such that the following hold for all $x\in\mathcal{D}$:
\begin{align}
-K_1 \|x-\theta(f)\|^2 &\geq (x-\theta(f))^\intercal\nabla f(x)\label{cond1}\\
\|\nabla f(x)\|&\leq K_2 \|x-\theta(f)\|.\label{cond2}
\end{align}
We refer to these conditions as Concavity-Like Condition (CL) and Linearly Bounded Growth Rate (LBG) respectively.
\end{condition}
\begin{condition}\label{cond3}
There exists $K_3>0$ such that for all $f\in \mathcal{C}$ and $x\in\mathcal{D}$,
\begin{align*}
f(\theta)-f(x)\leq K_3\|x-\theta(f)\|^2.
\end{align*}
We refer to this condition as Quadratically Bounded (QB) function.
\end{condition} 
Other than the various ``smoothness" criteria that we assumed on the function $f$, we also need to ensure that the sampling noise is sufficiently well-behaved. We impose a uniform bound on the noise variance at each sample point, i.e., 
\begin{condition}\label{cond:3}
\begin{align}\label{adhoc3}
\int (y-f(x))^2 g(y;x,f(x))\mathrm{d}\mu(y)<\sigma^2, \forall f\in\mathcal{C}, x\in\mathcal{D},
\end{align}
where $\mu$ is a $\sigma$ finite measure on $\mathcal{B}(\mathbb{R})$, i.e., Borel sets of $\mathbb{R}$, and $g(\cdot;x,u)$ is the density of the random reward earned when the control action is $x$, and the mean value of reward is $f(x) = u$. 
\end{condition}
Examples of function classes $\mathcal{C}$ which satisfy the above stated conditions can be found in~\cite{cope2009regret}. We now state the KW algorithm with fixed step-sizes, i.e., $\beta_s\equiv \beta$ and $c_s\equiv c$.
The following assumption on the function class $\mathcal{C}$ is in the spirit of the Mean Value Theorem.
\begin{condition}\label{cond4}
Let $M_s(X_s): =\mathbb{E}\left(\frac{F^{+}_s-F_s^{-}}{2c}\big|X_s=x,f_s=f \right)$. If the parameter $c$ is chosen to be sufficiently small, 
\begin{align*}
M_s(X_s) = \nabla f (X_s+ \epsilon_{X_s}), 
\end{align*}
where $\|\epsilon_{X_s}\| < \epsilon$, and moreover $\epsilon<c^2$.
\end{condition}

\section{Variants of Kiefer-Wolfowitz Algorithm for non-stationary Bandit optimization}
We describe two variants of the basic KW algorithm, that are used when the function $f$ of interest is time-varying. Throughout, for two functions $a(t),b(t)$ we denote  $a(t)=o(b(t))$ if $\limsup_{t\to\infty}\frac{a(t)}{b(t)}=0$. 
\subsection{KW with fixed step-size $\beta$ ($KW_{\beta}$)}
The KW algorithm with fixed step size has been discussed in~\cite{kushner}. It keeps the step-sizes $\beta_s,c_s$ to be a constant instead of slowly decaying them to $0$. Since the parameter $\beta_s$ corresponds to the ``learning" rate, the proposed algorithm places lesser weights to past samples, and hence ``eventually forgets the past estimates".  The \emph{KW with fixed step-size} is stated as follows : Let $\beta$ and $c$ be ``small" positive constants. The estimate of the optimal point at time $s$ evolves as,
\begin{align}\label{update}
X^i_{s+1} = X^i_{s}  +\beta\left\{ \left(\frac{F^+_{s}-F^-_{s}}{2c}\right)\right\},
\end{align}
where $F^+_{s},F^-_{s}$ are the measurement values at $X_s +/- ce$, and the vector $e=(1,1,\ldots,1)$. Henceforth, we will assume that the parameter $c$ has been chosen to be sufficiently small so that the Condition~\ref{cond4} is satisfied.
\subsection{KW with Sliding Window of length $L$ ($KW_{L}$)}
In the second variant of the KW algorithm, we fix an integer $L>0$, which is called ``window length" or ``memory size". At each time $s$, the algorithm uses only the latest $L$ function measurements in order to choose the action $X_s$. This is called KW with sliding window of length $L$, denoted $KW_{L}$. In the below, $X_0\in\mathcal{D}$ has been chosen at time $s=0$. At each time $s=1,2,\ldots$, the $KW_{L}$ algorithm utilizes the estimates of derivatives at past $L$ sample values $\{X_n\}_{n=s-L}^{s-1}$, and chooses the action $X_s$ according to,
\begin{align}\label{skw}
X_s = X_0 + \sum_{n=1}^{\min\left\{L,s\right\}} \beta_n \frac{Y_{n+ s-L}}{c_n},
\end{align} 
where $\beta_n = n^{1/2}, c_n = n^{1/4}$, and $Y_n$ the estimate of the derivative at $X_n$ and is given by~\eqref{eq:1}. Thus, the algorithm behaves as if at each time $s$, the original KW algorithm~\eqref{eq:1}-\eqref{eq:2} restarts with an initial value of $X_0$, and the estimate of the maximizer gets updated $L$ times. 
Since the sample values $F^{+/-}_s$ that have been obtained at time $s$ will not be utilized for generating actions $X_{\tilde{s}}, \tilde{s}>s+L$, the algorithm ``forgets" samples that are ``older" than $L$ time units. This finite memory property enables it to adapt to non-stationary function. 
\subsection{Trade-off in choosing learning rates $\beta,L$}
The step-size $\beta$ corresponds to the learning rate of $KW_{\beta}$ algorithm, while the window length $L$ corresponds to the ``memory" of $KW_{L}$ algorithm.
Due to the non-stationary of the function $f_s$, there is a fundamental trade-off involved in choosing these parameters. If we have $f_s\equiv f$, then choosing a large vale of $L$ leads to a better convergence of the iterates to $\theta(f)$. However, when the $f_s$ is time-varying, a large value of $L$ will introduce the dependence of the current estimate $X_s$ on the past values of $f_t,t<s$. Since $f_t$ may not be equal to $f_s$, $L$ must be chosen appropriately in order to achieve a trade-off between the twin objectives of achieving a low-regret, while simultaneoulsy adapting to the changing function $f_s$. 
\vspace{-.15cm}
\section{$KW_{\beta}$ Preliminary Results for Stationary case, $f_t\equiv f$}
In this section we present some results that will be used in later sections in order to perform a regret analysis of the two variants of KW algorithm that have been introduced. Throughout this section we will assume that the function $f$ that is being sampled is kept fixed, i.e., $f_s\equiv f$, and $\theta$ is the maximizer. We begin by imposing a couple of conditions that are specifically utilized for analyzing the $KW_{\beta}$ algorithm. 
\begin{condition}[Uniform locally Lipschitz]\label{cond5}
For $x,y\in\mathcal{D}$ satisfying $\|x-y\|\leq \epsilon$, we have
\begin{align}
\|\nabla f(x) -\nabla f(y)\| \leq K_4 \|x-y\|, \forall f\in\mathcal{C}.
\end{align}
\end{condition}
\begin{condition}[Condition on step-size $\beta$]\label{cond6}
The step size $\beta$ is chosen as $\beta=c^{2/1-\alpha}$ where $\alpha\in(0,1)$.
\end{condition}
Let us now write the update equation~\eqref{update} in more detail. We note that $\mathbb{E} \left(F_s|X_s=x\right) = f(x)$,
and moreover the distribution of $F_s$ conditioned on the action $X_s=x$ is denoted $G(\cdot;x,f_s(x))$ and thus the noise distribution depends both on value of sampled point $x$, and the value of function $f_s(x)$.
We denote the following,
\begin{align}
Y_s &= \frac{F^+_{s}-F^-_{s}}{2c},\\
M_s(X_s):& =\mathbb{E}\left(Y_s|\mathcal{F}_s\right) = \frac{1}{2c} d(X_s,c),\label{Ms}\\
Z_s & = Y_s - \mathbb{E}\left(Y_s|\mathcal{F}_{s-1}\right)\label{marting}
\end{align}
where $d(x,c)$ denotes the vector of differences evaluated at $x$ with a step-size of $c$. $Z_s$ is the noise in observation of derivative.
The recursion~\eqref{update} can thus equivalently be re-written as,
\begin{align}\label{algo}
X_{s+1} = X_s + \beta \left(M_s(X_s)+ Z_s\right).
\end{align}
  
From the recursion~\eqref{algo}, i.e., $X_{s+1} = \left(X_s + \beta M_s(X_s)\right)+\beta  Z_s$ we have that,
\begin{align}\label{eq:3}
&\|X_{s+1}-\theta\|^2 = \|X_s-\theta\|^2 + \beta^2 \|M_s(X_s)\|^2\notag\\
&+2\beta \left(X_s-\theta\right) M_s(X_s)^{\intercal} +\beta^2 \|Z_s\|^2 \notag\\
&+2\beta Z_s \left(X_s -\theta+ \beta M_s(X_s)\right)^{\intercal}.
\end{align}
Next, we use the conditions imposed on $\mathcal{C}$ and obtain a simple-to-analyze recursion for analyzing the quantity $\mathbb{E}\|X_s-\theta\|^2$.
\begin{lemma}\label{lemma2}
If the Conditions 1,3 and 4 hold true, then for the recursions~\eqref{algo}, we have that,
\begin{align}\label{ineq1}
\mathbb{E}\left\{\|X_{s+1}-\theta\|^2|\mathcal{F}_s \right\} \leq \gamma \|X_{s}-\theta\|^2+ H(\beta),
\end{align}
where
\begin{align}
\gamma&: = 1-2\beta K_1+2\beta^2 K^2_2<1,\mbox{ and }\label{eq:gamma}\\
 H(\beta)& :=\frac{\beta^2\sigma^2}{c^2}+2KK_4 \beta\epsilon+ 2\beta^2K_2^2\epsilon^2\label{eq:k5},
\end{align} 
where $K$ is the diameter of the set $\mathcal{D}$, step-size $\beta$ is chosen to be sufficiently small in order that $\gamma<1$, and $\tilde{\sigma}^2:=4d\sigma^2$
\end{lemma}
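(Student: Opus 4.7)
The plan is to take conditional expectation of both sides of the expanded recursion~\eqref{eq:3} given $\mathcal{F}_s$, and then to dispose of each of the five terms on the right separately. The martingale term $2\beta Z_s(X_s-\theta+\beta M_s(X_s))^\intercal$ is the easiest: $X_s$ and $M_s(X_s)$ are $\mathcal{F}_s$-measurable by construction, and $Z_s$ defined in~\eqref{marting} has zero conditional mean, so this term contributes nothing. The pure noise term $\beta^2\mathbb{E}(\|Z_s\|^2\,|\,\mathcal{F}_s)$ is handled by Condition~\ref{cond:3}: each coordinate $Y_s(i)$ is a linear combination of the two noisy samples $F^{\pm}_s(i)$ scaled by $1/(2c)$, so its conditional variance is $O(\sigma^2/c^2)$ and summing over the $d$ coordinates gives a contribution bounded by $\beta^2\sigma^2/c^2$ (with the dimension absorbed into $\sigma^2$), matching the first summand of $H(\beta)$ in~\eqref{eq:k5}.

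The heart of the proof is the treatment of the two drift terms. For the cross term $2\beta(X_s-\theta)M_s(X_s)^\intercal$, I would invoke Condition~\ref{cond4} to write $M_s(X_s)=\nabla f(X_s+\epsilon_{X_s})$ with $\|\epsilon_{X_s}\|\le \epsilon$, and then decompose
\begin{align*}
M_s(X_s)=\nabla f(X_s)+\bigl(\nabla f(X_s+\epsilon_{X_s})-\nabla f(X_s)\bigr).
\end{align*}
The first piece is bounded using the Concavity-Like inequality~\eqref{cond1} to yield $(X_s-\theta)\nabla f(X_s)^\intercal \le -K_1\|X_s-\theta\|^2$. The second piece is a perturbation: Condition~\ref{cond5} gives $\|\nabla f(X_s+\epsilon_{X_s})-\nabla f(X_s)\|\le K_4\epsilon$, and Cauchy--Schwarz together with $\|X_s-\theta\|\le K$ (diameter of $\mathcal{D}$) yields a residual of size at most $KK_4\epsilon$. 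Doubled and multiplied by $\beta$, this reproduces the $2KK_4\beta\epsilon$ summand of $H(\beta)$.

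For $\beta^2\|M_s(X_s)\|^2$, I would apply the LBG bound~\eqref{cond2} at the shifted point $X_s+\epsilon_{X_s}$, use the triangle inequality $\|X_s+\epsilon_{X_s}-\theta\|\le \|X_s-\theta\|+\epsilon$, and then $(a+b)^2\le 2a^2+2b^2$ to split the estimate into $2K_2^2\|X_s-\theta\|^2+2K_2^2\epsilon^2$. Collecting this with the $-2\beta K_1\|X_s-\theta\|^2$ piece from the cross term gives the coefficient $1-2\beta K_1+2\beta^2K_2^2=\gamma$ in front of $\|X_s-\theta\|^2$, while the $\epsilon$-residuals plus the noise contribution aggregate to exactly $H(\beta)$. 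The stipulation $\gamma<1$ amounts to $\beta<K_1/K_2^2$, which is just the smallness requirement on $\beta$ already built into the lemma's hypothesis.

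The main obstacle, and the place I would pay the closest attention, is the bookkeeping around the Condition~\ref{cond4} perturbation $\epsilon_{X_s}$: both the inner-product step and the norm-squared step introduce error terms involving $\epsilon$, and one must verify that these always appear either as $O(\beta\epsilon)$ or $O(\beta^2\epsilon^2)$, so that they are cleanly absorbed into $H(\beta)$ rather than polluting the contracting coefficient $\gamma$ of $\|X_s-\theta\|^2$. Once that separation is maintained, the desired inequality~\eqref{ineq1} follows directly.
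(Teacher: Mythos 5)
Your proposal is correct and follows essentially the same route as the paper's own proof: the same term-by-term treatment of the expansion~\eqref{eq:3}, the same decomposition $\nabla f(X_s+\epsilon_{X_s})=\nabla f(X_s)+(\nabla f(X_s+\epsilon_{X_s})-\nabla f(X_s))$ handled via~\eqref{cond1}, Condition~\ref{cond5} and Cauchy--Schwarz, the same use of~\eqref{cond2} with the triangle inequality and $(a+b)^2\le 2a^2+2b^2$, and the same martingale and variance arguments for the noise terms. Your closing observation that the $\epsilon$-perturbations must land in $H(\beta)$ rather than in the coefficient of $\|X_s-\theta\|^2$ is exactly the bookkeeping the paper carries out.
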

\begin{proof}
The term $\beta^2\|M_s(X_s)\|^2$ can be bounded as follows
\begin{align}\label{adhoc1}
\beta^2 \|M_s(X_s)\|^2 &= \beta^2 \|\nabla f(X_s+\epsilon_{X_s})\|^2\notag\\
&\leq \beta^2K_2^2 \|X_s+\epsilon_{X_s}-\theta\|^2\notag\\
&\leq \beta^2K_2^2 \left(\|X_s-\theta\|+\epsilon\right)^2\notag\\
&\leq 2\beta^2 K_2^2 \left(\|X_s-\theta\|^2 + \epsilon^2\right),
\end{align}
where the first equality follows from Condition~\ref{cond4}, while the first inequality follows from the inequality~\eqref{cond2} of Condition~\ref{condi1}, while second inequality follows from the triangle inequality, and the last inequality follows since for $x,y\in\mathbb{R}$, we have $(x+y)^2\leq 2(x^2+y^2)$.

Next, we have  
\begin{align}\label{adhoc2}
&\left(X_s-\theta\right)M_s(X_s)^{\intercal} = \left(X_s-\theta\right)\nabla f(X_s+\epsilon_{X_s})^{\intercal}\notag\\
& = \left(X_s-\theta\right)\left(\nabla f(X_s)+\nabla f(X_s+\epsilon_{X_s})-\nabla f(X_s)\right)^{\intercal}\notag\\
& = \left(X_s-\theta\right)\nabla f(X_s)^{\intercal}\\
&~+ \left(X_s-\theta\right)\left(\nabla f(X_s+\epsilon_{X_s})-\nabla f(X_s)\right)^{\intercal}\notag\\
&\leq -K_1\| X_s-\theta\|^2 + K K_4\epsilon,
\end{align}
where the first equality follows from Condition~\ref{cond4}. For the last inequality, the bound on the first term follows from~\eqref{cond1}, while that on the second term follows from Cauchy-Schwartz inequality used in conjunction with Condition~\ref{cond5}. 
 

Next, it follows from~\eqref{marting} that expectation of $2\beta Z_s \left(X_s-\theta + \beta M_s(X_s)\right)^{\intercal}$ conditioned on $\mathcal{F}_{s-1}$ is $0$.
Also, from Condition~\ref{cond:3} we have that $\frac{\beta^2}{c^2}\mathbb{E}\left( \|Z_s\|^2|\mathcal{F}_{s-1}\right)= \beta^2\mathbb{E}\left( \sum_{i=1}^{d}(F^{+}_s(i)-F^{-}_s(i))^2|\mathcal{F}_{s-1}\right)\leq \frac{\beta^24d\sigma^2}{c^2}$, since the random variable $Z_s$ conditioned on the filtration $\mathcal{F}_{s-1}$ is the value of noise in the current estimate of the function gradient, and we imposed a uniform bound on the variance of this noise. This yields us 
\begin{align}\label{ineq3}
\mathbb{E}\left(\beta^2 \|Z_s\|^2 +2\beta Z_s \left(X_s-\theta + \beta M_s(X_s)\right)^{\intercal} |\mathcal{F}_{s}\right)\leq \frac{\beta^2\tilde{\sigma}^2}{c^2}.
\end{align}
The proof is now completed by substituting the inequalities~\eqref{adhoc1},~\eqref{adhoc2} and~\eqref{ineq3} in the expression~\eqref{eq:3} and letting
$\gamma = 1-2\beta K_1+2\beta^2K_2^2$ and $H(\beta)$ as in~\eqref{eq:k5}.
\end{proof}
\subsection{Regret Analysis with fixed $f$}
Taking unconditional expectation in the expression~\eqref{ineq1}, and solving for the ensuing recursions we obtain,
\begin{align}\label{closedform1}
\mathbb{E}\|X_s-\theta\|^2 \leq H(\beta) \frac{(1-\gamma^s)}{(1-\gamma)} + \|x_0-\theta\|^2 \gamma^s.
\end{align}
It follows from Condition~\ref{cond3} that the regret at time $s$, i.e., the quantity  $f(\theta)-f(X_s)$ can be bounded in terms of the distance $\|X_s-\theta\|^2$, 
\begin{align}\label{closedform2}
\mathbb{E}f(\theta)-f(X_s)\leq K_3\left(H(\beta) \frac{(1-\gamma^s)}{(1-\gamma)} + \|x_0-\theta\|^2 \gamma^s\right).
\end{align}

Thus, we see that the instantaneous regret at time $s$ or equivalently the ``distance" of the current estimate $X_s$ from the optimal point $\theta$ can be decomposed into the following two components:
\begin{enumerate}
\item Regret due to incomplete learning: i.e., the quantity $K_3\|x_0-\theta\|^2 \gamma^s$ which is the error between the current estimate $X_s$ and the true maximizer $\theta$. Note that for a fixed value of $\gamma$, this component decreases with increasing $s$, so that the $KW_{\beta}$ algorithm improves upon the estimate of $\theta$ as it obtains more information about the function $f$ with time.
\item Regret due to Noisy Estimate of $\nabla f$:  $K_3H(\beta) \frac{(1-\gamma^s)}{(1-\gamma)}$ resulting from noisy measurements of the gradients $\nabla f(x)$. Note that if the step-size $\beta$ was allowed to decay as in~\eqref{eq:2}, then the noise would ``average-out" and its limiting contribution will be $0$ almost surely.
\end{enumerate}
The regret decompositon~\eqref{closedform1} throws light on the fundamental trade-off presented in the non-stationary setting. The contribution of 2) is increasing in the learning-rate $\beta$.
Indeed, if the function were stationary, i.e., $f_t \equiv f$, one could asymptotically ``stop-learning" by letting $\beta_t \to 0$ asymptotically, so that 2) would vanish. Due to non-stationarity, $\beta$ has to be kept constant at a ``small value". However, for small values of $\beta$, from~\eqref{eq:gamma} we have $\gamma \approx 1-2K_1\beta$, so that a small $\beta$ implies a larger learning regret, i.e., the algorithm takes a long time to learn the function maxima. Thus, the ``optimal" choice of $\beta$ amounts to obtaining an optimal trade-off between the components 1) and 2) of the instantaneous regret. 

 We will now evaluate the expressions for each of these regret terms.
 \begin{lemma} [ Regret under fixed $f\in\mathcal{C}$]\label{lemma:regret}
Consider the allocation rule~\eqref{update}, i.e, KW with constant step-size $\beta$, applied to find the maximizer of an unknown function $f\in\mathcal{C}$. Let the time-horizon be fixed at $T$, and the function class $\mathcal{C}$ and step-size $\beta$ satisfy Conditions 1-6. The cumulative regret incurred during the period $\{1,2,\ldots,T\}$ can be upper-bounded as
\begin{align}\label{upperbound1}
\mathbb{E}\left(\sum_{s=1}^{T}\|f(X_s)-f(\theta)\|\right) \leq  \frac{K_3H(\beta)T}{1-\gamma} + \|X_0-\theta\|^2 \frac{K_3}{1-\gamma}.
\end{align}
Consider the learning rate
\vspace{-.15cm}
\begin{align}\label{bstar}
\beta^\star =\Lambda/T^{1/(2+\alpha)},
\end{align}
where $\Lambda = \left(\frac{K^2}{\tilde{\sigma}^2}\right)^{1/(2+\alpha)}$ is a constant that depends upon the function class $\mathcal{C}$, and $\tilde{\sigma}^2=4d\sigma^2$. The regret incurred by $KW_{\beta^\star}$ is then upper-bounded as
\begin{align}\label{regret}
&\frac{\mathcal{R}(T,KW_{\beta^\star})}{T}\leq \frac{K_3}{2K_1}\left( \Lambda^\alpha T^{-1/(2+\alpha)} +2KK_4\Lambda^\alpha T^{-1/(2+\alpha)}\right.\notag\\
&\left.+2\Lambda^3T^{-3/(2+\alpha)} +\frac{K^2}{\Lambda}T^{-(1+\alpha)/(2+\alpha)}\right),
\end{align}
and hence we have that
\begin{align}
\limsup_{T\to\infty}\frac{\mathcal{R}(T,KW_{\beta^\star})}{T} =0.
\end{align} 
 \end{lemma}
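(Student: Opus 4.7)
The proof splits cleanly into two stages: a direct summation to obtain \eqref{upperbound1}, followed by substitution of the optimal step-size $\beta^\star$ and term-by-term bounding to obtain \eqref{regret}.

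For \eqref{upperbound1}, I would sum the per-step bound \eqref{closedform2} from $s=1$ to $T$. The noise/bias contribution equals $K_3 H(\beta)\sum_{s=1}^{T}(1-\gamma^s)/(1-\gamma)$, which I bound using $1-\gamma^s\leq 1$ term-by-term to get $K_3 H(\beta)T/(1-\gamma)$. The initial-condition contribution is $K_3\|X_0-\theta\|^2\sum_{s=1}^{T}\gamma^s$, a geometric series bounded by $K_3\|X_0-\theta\|^2/(1-\gamma)$. Summing the two pieces yields \eqref{upperbound1} directly.

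For \eqref{regret}, I would divide \eqref{upperbound1} by $T$ and specialize to $\beta=\beta^\star=\Lambda T^{-1/(2+\alpha)}$. Since $\beta^\star\to 0$ as $T\to\infty$, formula \eqref{eq:gamma} gives $1-\gamma\geq K_1\beta^\star$ for $T$ sufficiently large, and hence $1/(1-\gamma)\leq 1/(K_1\beta^\star)$. Condition~\ref{cond6} then lets me express $c^2$ as a power of $\beta^\star$, and Condition~\ref{cond4} bounds the bias $\epsilon$ by $c^2$; substituting both into the definition \eqref{eq:k5} of $H(\beta)$ rewrites each of its three summands as a pure power of $\beta^\star$. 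After dividing by $\beta^\star$ to absorb the $1/(1-\gamma)\lesssim 1/\beta^\star$ factor and plugging in $\beta^\star=\Lambda T^{-1/(2+\alpha)}$, each summand becomes an expression of the form $\Lambda^{a}T^{-b}$ with $b>0$; the initial-condition piece becomes $K_3 K^2/(K_1 T\beta^\star)$ after using the diameter bound $\|X_0-\theta\|^2\leq K^2$. The four resulting terms match, up to the prefactor $K_3/(2K_1)$, the four summands on the right of \eqref{regret}, and the constant $\Lambda=(K^2/\tilde\sigma^2)^{1/(2+\alpha)}$ is precisely the value that balances the leading stochastic term against the initial-condition term.

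Asymptotic efficiency then follows immediately: for every $\alpha\in(0,1)$ each exponent of $T$ appearing on the right of \eqref{regret} is strictly negative, so the whole sum tends to zero as $T\to\infty$. The main obstacle is the exponent bookkeeping in the middle stage: confirming that Condition~\ref{cond6} together with the bound $\epsilon<c^2$ reduces $H(\beta)/\beta$ to a sum of pure powers of $\beta$, and that $T^{-1/(2+\alpha)}$ is the correct rate arising from optimally balancing the stochastic contribution $H(\beta)/(1-\gamma)$ against the initial-condition contribution $K^2/(T(1-\gamma))$; once these algebraic details are verified, the remainder of the argument is routine.
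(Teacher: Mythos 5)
Your proposal follows the same route as the paper's own proof: summing the per-step bound \eqref{closedform1}/\eqref{closedform2} with $1-\gamma^s\leq 1$ for the noise term and a geometric series for the initial-condition term to get \eqref{upperbound1}, then substituting $\beta^\star$, $\gamma$, and $H(\beta)$ and doing the exponent algebra. The paper dismisses that last stage as ``simple algebraic manipulations,'' so the extra detail you give (using Condition~\ref{cond6} to write $c^2=\beta^{1-\alpha}$, Condition~\ref{cond4} to bound $\epsilon<c^2$, and $1/(1-\gamma)\lesssim 1/(K_1\beta)$) is simply a more explicit rendering of the same argument.
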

 \begin{proof}
We note that since from Condition~\ref{cond3} we have that for each $f\in\mathcal{C}$ the regret $f(\theta)-f(X_s)$ can be bounded within a factor of $K_3$ from $\|X_s-\theta\|^2$, rest of the discussion will be focused on bounding the latter term, and we will occasionally call it ``regret", or ``estimation error".

The instantaneous regret at time $s$ is bounded as in~\eqref{closedform1}.
The contribution of the term $H(\beta) \frac{(1-\gamma^s)}{(1-\gamma)}$ 
is upper-bounded by $H(\beta) \frac{1}{(1-\gamma)}$, so that the cumulative regret due to the first term of~\eqref{closedform1} is bounded by $H(\beta) \frac{T}{(1-\gamma)}$.
Also, $\sum_{s=0}^{T}\|x_0-\theta\|^2 \gamma^s =\|x_0-\theta\|^2 \frac{1-\gamma^{T}}{1-\gamma}\leq K^2\frac{1}{1-\gamma}$, where $K$ is the diameter of the set $\mathcal{D}$.
This yields us the bound~\eqref{upperbound1}. The proof of regret bound~\eqref{regret} follows by substituting the value of $\beta^\star$ from~\eqref{bstar}, and $\gamma,H(\beta)$ from \eqref{eq:gamma},\eqref{eq:k5} into the bound \eqref{upperbound1} and performing simple algebraic manipulations.
\end{proof} 
\section{Regret Analysis of $KW_{\beta}$ for Non-Stationary case }
We begin by introducing some notation. Since the function $f_s$ changes with time, let us denote by $\tau_1,\tau_2,\ldots$ the times at which the functions change. We will denote the set $\{x,x+1,\ldots,y\}$ by $[x,y]$. Thus, for each of the individual ``episodes" comprising of time intervals
$[0,\tau_1],[\tau_1+1,\tau_2],[\tau_2+1,\tau_3],\ldots$, we have that $f_{\tau_i}=f_{\tau_i + 1}=\cdots=f_{\tau_{i+1}-1}$. Also denote by $\Delta_T$ the number of episodes until time $T$.
 For a function $f\in\mathcal{C}$, let $\theta(f)$ be the value of $x$ that maximizes the function $f$.
 Let $\theta_s$ denote the maxima of the function $f_s$. Thus, if $s\in [\tau_i+1,\tau_{i+1}]$, then $\theta_s=\theta_{\tau_i}=\theta(f_{\tau_i})$. 
We will denote by $\theta_{[1:T]}$ the sequence $\theta_1,\theta_2,\ldots,\theta_T$, similarly for $f_{[1:T]}$.

Next, we will perform a sample-path performance analysis of $KW_{\beta}$ algorithm. Thus, fix a sequence $f_{[1:T]}$ with the corresponding $\theta_s$ sequence given by $\theta_{[1:T]} =\theta_1,\theta_2,\ldots,\theta_T$. Moreover, for each episode $i=1,2,\ldots,\Delta_T$ denote by 
$T_i: = \tau_{i+1}-\tau_i$, to be the ``episode-length" or horizon length of episode $i$. Since the cumulative regret incurred over the time horizon $T$ can be decomposed into the sum of regrets incurred during individual episodes composed of time intervals $\left\{\left[\tau_i,\tau_{i+1}-1\right]\right\}_{i=1}^{\Delta_T}$, the regret incurred by $KW_{\beta}$ is then equal to,
\begin{align}\label{tower}
&\mathbb{E}\sum_{s=1}^T f_s(\theta_s) - f(X_s) \notag\\
& = \sum_{i=1}^{\Delta_T} \mathbb{E}\mathbb{E}\left\{ \sum_{s=\tau_i}^{\tau_{i+1}-1}f_{\tau_i}(\theta_{\tau_i})-f_{\tau_i}(X_s) \bigg|\mathcal{F}_{\tau_i}\right\}, 
\end{align}
where $\mathcal{F}_s$ is the filtration generated by the random variables $\{(X_n,Y_n,F^{+/-}_n)\}_{n=1}^{s}$.
We now analyze the regrets incurred during the interval $\left[\tau_i,\tau_{i+1}-1\right]$.

We will work with the distance $\|X_s-\theta_s\|^2$ in lieu of $f_s(\theta_s)-f_s(X_s)$, with the understanding that the regret can be upperbounded within a constant factor of the former by using Condition~\ref{cond3}. Since during the episode $i$, the function $f$ being sampled, and its maximizer $\theta(f)$ are equal to $f_{\tau_i},\theta_{\tau_i}$ respectively, and the inequality~\eqref{closedform1} holds for \emph{all} $f\in\mathcal{C}$, we can use the bound~\eqref{closedform1}. Thus, the regret incurred during the $i$-th episode can be bounded by utilizing the bound~\eqref{upperbound1} developed in Lemma~\ref{lemma:regret}. However, the term $X_0$ will be replaced by the quantity $X_{\tau_i}-\theta_i$ to account for the difference between the estimate $X_{\tau_i}$ at beginning of episode $i$, and the true maximizer $\tau_{\tau_i}$ during episode $i$. Similarly, the horizon $T$ will be replaced by the episode length $T_i$. This yields us,  
\begin{align}\label{eq:4}
\mathbb{E}\left\{ \sum_{s=\tau_i}^{\tau_{i+1}-1}\|X_s-\theta_{\tau_i}\|^2 \bigg|\mathcal{F}_{\tau_i}\right\} &< H(\beta) \frac{T_i}{1-\gamma} + \frac{\|X_{\tau_i}-\theta_{\tau_i}\|^2}{1-\gamma},\notag\\
&\leq H(\beta)\frac{T_i}{1-\gamma} + \frac{K^2}{1-\gamma},
\end{align}
where the second inequality follows since we can bound the distance $\|X_{\tau_i}-\theta_{\tau_i}\|$ by the diameter of the set $\mathcal{D}$, i.e., $K$. Combining the above bound with the tower property of conditional expectations~\eqref{tower}, we obtain the following result.
\begin{theorem}
Consider the problem of designing optimal allocation rule for the non-stationary set-up, and for each time $s=1,2,\ldots,T$, let the function $f_s\in\mathcal{C}$. Let the function class $\mathcal{C}$ satisfy the conditions 1-5. 

The regret incurred by $KW_{\beta}$ algorithm during the time horizon $T$ is upper-bounded by,
\begin{align}\label{kw:ub}
 \mathcal{R}(T,KW_{\beta}) \leq \frac{H(\beta)K_3T}{(1-\gamma)} + \frac{K^2\Delta_TK_3}{1-\gamma},
\end{align}
so that with the learning rate $\beta$ set equal to
\begin{align}
\beta^\star =\Lambda\left(\frac{\Delta_T}{T}\right)^{1/(2+\alpha)},
\end{align}
where $\Lambda = \left(\frac{K^2}{\tilde{\sigma}^2}\right)^{1/(2+\alpha)}$,
we have that 
\begin{align}
\small
&\left(\frac{\mathcal{R}(T,KW_{\beta^\star})}{T}\right)\frac{2K_1}{K_3}\leq  \Lambda^\alpha \left(\frac{T}{\Delta_T}\right)^{\frac{-1}{(2+\alpha)}} \notag\\
&+2KK_4\Lambda^\alpha \left(\frac{T}{\Delta_T}\right)^{\frac{-1}{(2+\alpha)}}\notag\\
&+2\Lambda^{3}\left(\frac{T}{\Delta_T}\right)^{\frac{-3}{(2+\alpha)}}+\frac{K^2}{\Lambda}\left(\frac{T}{\Delta_T}\right)^{\frac{-(1+\alpha)}{(2+\alpha)}},
\end{align}
so that if $\Delta_T=o(T)$, we have
\begin{align}
\limsup_{T\to\infty}\frac{\mathcal{R}(T,KW_{\beta^\star})}{T} =0,
\end{align} 
\end{theorem}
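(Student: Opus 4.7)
The plan is to decompose the horizon $[1,T]$ into the $\Delta_T$ stationary episodes $[\tau_i{+}1,\tau_{i+1}]$, apply the stationary-case analysis of Lemma~\ref{lemma:regret} on each episode, and combine using the tower property~\eqref{tower}. Within a single episode $i$ the underlying function is fixed at $f_{\tau_i}\in\mathcal{C}$, so conditional on $\mathcal{F}_{\tau_i}$ the iterates $X_s$ satisfy exactly the recursion analyzed in Lemma~\ref{lemma2}. Consequently~\eqref{closedform1} applies to this episode with horizon $T_i := \tau_{i+1} - \tau_i$ and initial point $X_{\tau_i}$. Summing~\eqref{closedform1} over $s=\tau_i,\ldots,\tau_{i+1}-1$ reproduces the per-episode bound~\eqref{eq:4}, where $\|X_{\tau_i}-\theta_{\tau_i}\|^2$ is controlled by $K^2$ because $\mathcal{D}$ has diameter $K$.

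Next, I would take unconditional expectations via the tower identity~\eqref{tower}: summing the per-episode bound over $i=1,\ldots,\Delta_T$ yields~\eqref{kw:ub}, since $\sum_i T_i = T$ gives $H(\beta)T/(1-\gamma)$ while the $\Delta_T$ copies of $K^2/(1-\gamma)$ account for the restart error at each change-point. Multiplication by $K_3$ (Condition~\ref{cond3}) converts the squared-distance bound into a regret bound. Taking the supremum over admissible sequences $f_{[1:T]}$ is free, because the constants $K_1,K_2,K_3,K_4,\sigma,K$ are uniform over $\mathcal{C}$, so~\eqref{kw:ub} already bounds the worst-case regret~\eqref{rp}.

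The final step is the optimization over $\beta$. Using $1-\gamma = 2\beta K_1 - 2\beta^2 K_2^2 \approx 2\beta K_1$ for small $\beta$, and expanding $H(\beta)$ according to~\eqref{eq:k5} in conjunction with Condition~\ref{cond6} (which ties $c$ and $\epsilon$ to $\beta$), the bound on $\mathcal{R}(T,KW_\beta)/T$ reduces to a sum of non-negative terms polynomial in $\beta$, plus a single term proportional to $(\Delta_T/T)\beta^{-1}$. Balancing the dominant polynomial contribution against $(\Delta_T/T)\beta^{-1}$ produces $\beta^\star \propto (\Delta_T/T)^{1/(2+\alpha)}$ with the stated constant $\Lambda$. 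Substituting $\beta^\star$ and collecting powers of $\Delta_T/T$ recovers the four displayed terms, each of the form $(T/\Delta_T)^{-q}$ for some $q>0$ when $\alpha\in(0,1)$; the hypothesis $\Delta_T=o(T)$ then forces each term to $0$, giving the efficiency conclusion.

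The main obstacle is bookkeeping rather than conceptual: I must keep track of the powers of $\beta$ and $c$ introduced by Condition~\ref{cond6} when expanding $H(\beta)/(1-\gamma)$, and verify that after plugging in $\beta^\star$ every surviving exponent of $T/\Delta_T$ is strictly negative. The potentially subtle point — that the restart error at each change-point contributes only an additive $O(K^2)$ term per episode rather than a compounding error across episodes — is already built into the conditional form of Lemma~\ref{lemma2}: since the bound~\eqref{closedform1} holds for any starting point and the diameter $K$ is an $\mathcal{F}_{\tau_i}$-measurable upper bound on $\|X_{\tau_i}-\theta_{\tau_i}\|$, no cross-episode coupling survives after taking expectations.
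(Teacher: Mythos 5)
Your proposal is correct and follows essentially the same route as the paper: decompose into episodes via the tower property~\eqref{tower}, apply the stationary bound of Lemma~\ref{lemma:regret} per episode with $X_0$ replaced by $X_{\tau_i}$ and $T$ by $T_i$, bound the restart error by the diameter $K$ to obtain~\eqref{eq:4} and hence~\eqref{kw:ub}, and then substitute $\beta^\star$ with $1-\gamma \approx 2\beta K_1$ and $H(\beta)$ expanded via Condition~\ref{cond6}. Your explicit remark that the per-episode restart error is additive rather than compounding is exactly the point the paper handles implicitly by the diameter bound.
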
 
\section{Regret Analysis of KW with Sliding Window}
We begin with the case where the function is held fixed at $f_s\equiv f$, and time-horizon is fixed at $T$. Let $L$ denote the length of window, and $\theta$ be the maximizer of $f$. Next, we can apply Chung's Lemma as in Lemma III.5 of~\cite{cope2009regret}, in order to analyze the asymptotic properties of the distance $\|X_s-\theta\|$.
\begin{lemma}\label{coro1}
Let the function class $\mathcal{C}$ satisfy the Conditions~\ref{condi1}-\ref{cond:3}.
For the KW with sliding window of length $L$ applied to obtain the maxima of a stationary function $f\in\mathcal{C}$, the following is true. There exists an integer $s_0>0$ such that,
\begin{align*}
\mathbb{E} \|X_s-\theta\|^2 \leq \frac{K_5}{\sqrt{L}}, \forall s>\max \left\{s_0, L\right\},
\end{align*}
where the constants $K_5$ and $s_0$ depend on the function class $\mathcal{C}$ only through the values $K_1,K_2,K_3$.
\end{lemma}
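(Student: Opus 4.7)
The plan is to follow Cope's derivation of Lemma III.5 in \cite{cope2009regret} essentially verbatim. The key reduction, pointed out after \eqref{skw}, is that for each $s > L$ the sliding-window iterate $X_s$ is obtained by running $L$ steps of classical Kiefer--Wolfowitz with the decaying step sizes $\beta_n, c_n$, starting from the deterministic point $X_0$ and using fresh noise drawn over the window $[s-L+1, s]$. Because $f$ is held fixed and the observation noise is independent across time, it suffices to bound $u_L := \mathbb{E}\|\tilde X_L - \theta\|^2$, where $\tilde X_n$ denotes the $n$-th iterate of such a restart from $\tilde X_0 = X_0$.

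I would first derive a one-step recursion for $u_n$ exactly as in the proof of Lemma \ref{lemma2}, with the constants $\beta, c$ replaced by the decaying sequences $\beta_n, c_n$. Expanding $\|\tilde X_{n+1}-\theta\|^2$ as in \eqref{eq:3}, bounding the drift via \eqref{cond1} and \eqref{cond2}, controlling the noise variance via Condition \ref{cond:3}, and matching the difference-quotient expectation to $\nabla f$ via a Taylor expansion (available because $f \in C^3$ on a compact set), one obtains a recursion of the form
\begin{align*}
u_{n+1} \leq \bigl(1 - 2K_1\beta_n + O(\beta_n^2)\bigr) u_n + O\!\left(\frac{\beta_n^2}{c_n^2} + \beta_n c_n^2\right).
\end{align*}
With the standard choice $\beta_n = \Theta(n^{-1})$ and $c_n = \Theta(n^{-1/4})$ this specializes to $u_{n+1} \leq (1 - 2K_1/n) u_n + O(n^{-3/2})$, which is precisely the canonical hypothesis of Chung's Lemma.

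Chung's Lemma then yields $u_n \leq K_5\, n^{-1/2}$ for every $n \geq s_0$, where $s_0$ is taken large enough that the $O(\beta_n^2)$ correction is absorbed into the dominant contraction factor, and the constant $K_5$ collects the ambient quantities $K_1, K_2, K_3$, the noise level $\sigma$, and the diameter $K$. Specializing $n = L$ and identifying $\tilde X_L$ with $X_s$ gives the stated bound. I expect the main technical obstacle to be Chung's Lemma itself, which I would import rather than reprove; the only additional bookkeeping is to verify that the $O(c_n^2)$ bias from the finite-difference gradient estimator decays fast enough to respect Chung's hypotheses, and that the restart interpretation is rigorously justified, which boils down to independence of the noise across time steps and does not require any Markov structure on the sliding-window action sequence itself.
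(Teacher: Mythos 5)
Your proposal follows essentially the same route as the paper, which offers no proof of its own beyond invoking Chung's Lemma as in Lemma III.5 of \cite{cope2009regret} applied to the $L$-step restarted KW run from $X_0$; your sketch is a faithful reconstruction of that cited argument, including the correct recursion $u_{n+1}\leq(1-2K_1\beta_n)u_n+O(\beta_n^2/c_n^2+\beta_n c_n^2)$ and the specialization $n=L$. The one hypothesis worth making explicit is the stability condition of Chung's Lemma in the $p=1$ case: with $\beta_n=a/n$ one needs $2K_1a>1/2$ to extract the $n^{-1/2}$ rate (otherwise the bound degrades to $n^{-2K_1a}$), which is precisely why $K_5$, $s_0$, and the admissible step-size constant must depend on $K_1$ as the lemma states.
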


Throughout, we will assume that the window length $L$ has been chosen so that it satisfies $L>s_0$, and hence the bound above can be written as 
\begin{align}\label{swb}
\mathbb{E} \|X_s-\theta\|^2 \leq K_5/\sqrt{L}, \forall s>L.
\end{align}
Next, we consider the non-stationary set-up. Fix a sequence $f_{[1:T]}$, and the corresponding $\theta_{[1:T]}$, and as before let $\Delta_T$ be the number of episodes until time $T$. Let us analyze the regret incurred during the $i$-th episode that is of duration $T_i = \tau_{i+1}-\tau_i$. Since the control action $X_s$ generated at times $s\in \left[\tau_i+1,\tau_{i+1}\right]$ is a function of the values $\{Y_n\}_{n=s-L}^{s-1}$, the regret bound~\eqref{swb} which was derived for stationary set-up can now be applied only when $s-\left(\tau_{i}+1\right)>L$ or equivalently $s>L+ \tau_{i}+1$. This gives us the following,
\begin{lemma}
Let $KW_{L}$ algorithm be applied to the non-stationary function maximization problem. Consider the process $X_s-\theta$ during the episode $i$, which is comprised of time interval $ \left[\tau_i+1,\tau_{i+1}\right]$. If the episode length $\tau_{i+1}-\tau_i$ is greater than $L$, then, we have
\begin{align}\label{swb1}
\mathbb{E}\left\{ \|X_s-\theta_{\tau_i}\|^2 | \mathcal{F}_{\tau_i} \right\}\leq \frac{K_5}{\sqrt{L}}, \forall s\in \left[\tau_i+L,\tau_{i+1}\right].
\end{align}
Thus, the total regret incurred during $i$-th episode can be bounded as follows,
\begin{align}\label{swb2}
&\mathbb{E} \left\{\sum_{s\in \left[\tau_i+1,\tau_{i+1}\right]}f_{\tau_i}(\theta_{\tau_i})-f_{\tau_i}(X_s) \bigg| \mathcal{F}_{\tau_i} \right\}\notag\\
&\qquad \leq K_3\left( \frac{K_5\left(T_i-L\right)^{+}}{\sqrt{L}} + \left(L\wedge T_i\right)K\right)\notag\\
&\qquad \leq \frac{K_3K_5T_i}{\sqrt{L}} + LK_3K,
\end{align}
where $K$ is the diamater of the set $\mathcal{D}$, and the function $x^{+} = \max \left\{x,0\right\}$, and for $x,r\in\mathbb{R}$, the function $x\wedge y=\min(x,y)$.
\end{lemma}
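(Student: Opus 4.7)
The plan is to decompose the time interval $[\tau_i+1,\tau_{i+1}]$ of the $i$-th episode into a ``warm-up'' regime in which the sliding window of length $L$ still holds samples collected under the previous function, and a ``steady-state'' regime in which every sample in the window was collected under $f_{\tau_i}$. The two regimes meet at time $\tau_i+L$.

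First I would prove \eqref{swb1} by the following observation: from the recursion \eqref{skw} defining $KW_L$, the action $X_s$ depends only on the derivative estimates $\{Y_n\}$ taken in the preceding window of length $L$. Hence, as soon as $s\ge \tau_i+L$, the entire window falls inside the interval $[\tau_i,s-1]$ on which the target function is $f_{\tau_i}$. Conditional on $\mathcal{F}_{\tau_i}$, the iterate $X_s$ therefore has the same law as the stationary $KW_L$ iterate run for at least $L$ steps on the fixed function $f_{\tau_i}$, starting from the fixed initial point $X_0\in\mathcal{D}$. Since we are already assuming $L>s_0$, Lemma~\ref{coro1} (via \eqref{swb}) applies and yields the claimed inequality $\mathbb{E}\{\|X_s-\theta_{\tau_i}\|^2\mid\mathcal{F}_{\tau_i}\}\le K_5/\sqrt{L}$ for every $s\in[\tau_i+L,\tau_{i+1}]$.

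Next I would split the conditional expectation of the episode's cumulative regret as
\begin{align*}
\sum_{s=\tau_i+1}^{\tau_{i+1}} \;=\; \sum_{s=\tau_i+1}^{(\tau_i+L)\wedge \tau_{i+1}} \;+\; \sum_{s=(\tau_i+L)\wedge \tau_{i+1}+1}^{\tau_{i+1}} ,
\end{align*}
and use Condition~\ref{cond3} to replace each summand $f_{\tau_i}(\theta_{\tau_i})-f_{\tau_i}(X_s)$ by a multiple of $\|X_s-\theta_{\tau_i}\|^2$. On the warm-up segment, whose length is at most $L\wedge T_i$, no gradient-based guarantee is yet available, so I would bound $\|X_s-\theta_{\tau_i}\|$ crudely by the diameter $K$ of $\mathcal{D}$. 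On the steady-state segment, of length $(T_i-L)^+$, I would invoke the per-step bound $K_5/\sqrt{L}$ just established in \eqref{swb1}. Adding these two contributions produces the first inequality of \eqref{swb2}, and then loosening $(T_i-L)^+\le T_i$ and $L\wedge T_i\le L$ yields the cleaner second form.

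The main obstacle is the first step: rigorously identifying the law of the $KW_L$ iterate inside the episode with that of a stationary $KW_L$ iterate on $f_{\tau_i}$. One must carefully verify that \eqref{skw} really does implement a clean ``restart'' at $X_0$ once $L$ fresh samples from $f_{\tau_i}$ have been accumulated, so that no residual dependence on samples from the previous episode leaks into $X_s$ for $s\ge\tau_i+L$. This is where the finite-memory property emphasized in the description of $KW_L$ is essential. Once that decoupling is in place, the warm-up bound and the two-piece summation are routine, and the lemma follows immediately.
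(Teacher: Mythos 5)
Your proposal matches the paper's own argument essentially step for step: the paper likewise observes that $X_s$ is a function only of $\{Y_n\}_{n=s-L}^{s-1}$, so the stationary bound~\eqref{swb} derived from Lemma~\ref{coro1} becomes applicable once the whole window lies inside episode $i$ (i.e.\ for $s\geq \tau_i+L$), and the episode regret is then split exactly as you describe into a warm-up segment of length $L\wedge T_i$ bounded via the diameter of $\mathcal{D}$ and a steady-state segment of length $(T_i-L)^{+}$ bounded by $K_5/\sqrt{L}$ per step, with Condition~\ref{cond3} converting squared distance into regret. The only mismatch is cosmetic: bounding $\|X_s-\theta_{\tau_i}\|^2$ by the squared diameter, as you do, gives a warm-up contribution $K_3\left(L\wedge T_i\right)K^2$ rather than the $K_3\left(L\wedge T_i\right)K$ printed in the lemma, which appears to be a typo on the paper's side (consistent with the $K^2$ appearing in~\eqref{eq:4} and~\eqref{kw:ub}) rather than a gap in your argument.
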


\begin{theorem}
For the non-stationary bandit problem, the regret incurred by the $KW_{L}$ algorithm during time period $T$ can be bounded as,
\begin{align}\label{swb3}
\mathcal{R}\left(T,KW_L\right) \leq \frac{K_3K_5T}{\sqrt{L}} + LK_3K\Delta_T.
\end{align}
The choice of $L$ that minimizes the upper-bound is given by,
\begin{align}\label{lopt}
L^\star = \left( \frac{K_5}{2K} \frac{T}{\Delta_T}\right)^{2/3},
\end{align}
so that the regret under $KW(L^\star)$ is bounded as,
\begin{align}\label{swb5}
\frac{\mathcal{R}\left(T,KW(L^\star)\right)}{T} \leq K_5^{2/3}K^{1/3}\left(\frac{\Delta_T}{T}\right)^{1/3}\left[ 2^{1/3}+\frac{1}{2^{2/3}}  \right],
\end{align}
Thus, if the number of episodes $\Delta_T = o(T)$, then we have,
\begin{align}
\limsup_{T\to\infty}\frac{\mathcal{R}(T,KW(L^\star))}{T} = 0.
\end{align}
\end{theorem}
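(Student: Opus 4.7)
The plan is to obtain the aggregate regret bound \eqref{swb3} by stitching together the per-episode estimates already established in the preceding lemma, then to optimize the resulting expression over $L$ by elementary calculus, and finally to read off the stated asymptotic rate. Concretely, I would invoke the tower property exactly as in~\eqref{tower}, decomposing the expected cumulative regret into a sum of expected conditional regrets over the $\Delta_T$ episodes $[\tau_i+1,\tau_{i+1}]$. Applying the per-episode bound~\eqref{swb2} uniformly and then summing over $i=1,\ldots,\Delta_T$ gives
\begin{align*}
\mathcal{R}(T, KW_L) \;\le\; \sum_{i=1}^{\Delta_T}\!\left(\frac{K_3 K_5 T_i}{\sqrt{L}} + L K_3 K\right)
\;=\; \frac{K_3 K_5}{\sqrt{L}}\sum_{i=1}^{\Delta_T} T_i + L K_3 K\, \Delta_T,
\end{align*}
and using $\sum_i T_i = T$ yields~\eqref{swb3}. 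Observe that this step is uniform over the worst-case choice of $f_{[1:T]}$ since the bound~\eqref{swb2} holds sample-path-wise in the sense of applying to every episode regardless of which $f_{\tau_i}\in\mathcal{C}$ is active.

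Next, I would view the right-hand side of~\eqref{swb3} as a function $\Phi(L) := A L^{-1/2} + B L$ with $A = K_3 K_5 T$ and $B = K_3 K \Delta_T$. Setting $\Phi'(L) = -\tfrac{1}{2} A L^{-3/2} + B = 0$ gives $L^\star = (A/(2B))^{2/3} = \bigl(\tfrac{K_5}{2K}\cdot \tfrac{T}{\Delta_T}\bigr)^{2/3}$, matching~\eqref{lopt}. A quick check of $\Phi''>0$ confirms this is a minimum. Substituting back, the two terms $A(L^\star)^{-1/2}$ and $B L^\star$ both reduce, after a short calculation, to scalar multiples of $K_3 K^{1/3} K_5^{2/3} T^{2/3}\Delta_T^{1/3}$, with multiplicative constants $2^{1/3}$ and $2^{-2/3}$ respectively. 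Dividing by $T$ gives the claimed bound~\eqref{swb5}.

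Finally, the asymptotic efficiency conclusion is immediate: if $\Delta_T = o(T)$, then $(\Delta_T/T)^{1/3}\to 0$, and since the prefactor $K_5^{2/3} K^{1/3}[2^{1/3}+2^{-2/3}]$ depends only on the function class $\mathcal{C}$ and the diameter of $\mathcal{D}$, the bound~\eqref{swb5} forces $\limsup_{T\to\infty}\mathcal{R}(T,KW(L^\star))/T = 0$. There is no real obstacle in the proof beyond verifying that the hypothesis of the preceding lemma, namely $T_i > L^\star$, does not need to hold for every episode: short episodes contribute at most $L K_3 K$ each to the regret, and this contribution is already accounted for in the $L K_3 K \Delta_T$ term via the bound $(L\wedge T_i) K \le L K$ used in~\eqref{swb2}. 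The only subtlety worth flagging is that the choice $L^\star$ must satisfy $L^\star > s_0$ for Lemma~\ref{coro1} to apply, which is automatic once $T/\Delta_T$ is large enough; this is the regime in which the asymptotic statement lives.
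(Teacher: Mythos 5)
Your proposal is correct and follows essentially the same route as the paper's (very terse) proof: sum the per-episode bound~\eqref{swb2} via the tower property~\eqref{tower} to get~\eqref{swb3}, then minimize $A L^{-1/2}+BL$ over $L$ by calculus and substitute back. Your careful bookkeeping in fact shows the optimized bound carries an extra factor of $K_3$ that the paper's stated~\eqref{swb5} silently drops, and your closing remarks on short episodes and on requiring $L^\star>s_0$ address the only genuine subtleties, so nothing is missing.
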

\begin{proof}
The bound~\eqref{swb3} is obtained by utilizing the upper-bound~\eqref{swb2} on the regrets incurred during individual episodes, in conjunction with the tower property~\eqref{tower} of conditional expectations. Rest of the proof involves simple algebraic manipulations, and is omitted due to space constraints. 
\end{proof}

 \bibliographystyle{IEEEtran}
\bibliography{../../GTS/combinedbib}

\end{document}